\documentclass[11pt,letterpaper,twoside,reqno]{amsart}
\usepackage{amsmath,amsfonts,amsbsy,amsgen,amscd,mathrsfs,amssymb,amsthm,mathtools,tensor}
\usepackage{siunitx}
\usepackage{algorithm,algorithmic}
\usepackage[shortlabels]{enumitem}

\usepackage{authblk}
\usepackage[foot]{amsaddr}

\usepackage[toc, page]{appendix}

\usepackage[margin=1in]{geometry}
\usepackage[dvipsnames]{xcolor}
\definecolor{MyColor}{HTML}{0047AB}
\usepackage{fancyhdr}

\usepackage{hyperref}
\hypersetup{
colorlinks=true, 
linktoc=all,     
linkcolor=blue,  
}

\usepackage{etoolbox}
\makeatletter
\renewcommand{\@secnumfont}{\bfseries}
\makeatother
\patchcmd{\section}{\scshape}{\bfseries}{}{}
\patchcmd{\section}{\normalfont}{\normalfont\color{MyColor}}{}{}
\patchcmd{\subsection}{\normalfont}{\normalfont\color{MyColor}}{}{}
\makeatletter
\def\subsubsection{\@startsection{subsubsection}{3}
\z@{.5\linespacing\@plus.7\linespacing}{-.5em}
{\normalfont\bfseries}}
\makeatother

\usepackage{graphicx}
\usepackage{float}
\usepackage{caption}
\usepackage{subcaption}
\usepackage{tikz}

\usepackage{booktabs} 
\usepackage{multirow}
\usepackage{tabu} 

\usepackage[normalem]{ulem} 

\usepackage[color=yellow]{todonotes}

\newtheorem{theorem}{Theorem}[section]
\newtheorem{lemma}[theorem]{Lemma}
\newtheorem{definition}[theorem]{Definition}
\newtheorem{assumption}[theorem]{Assumption}
\newtheorem{proposition}[theorem]{Proposition}

\makeatletter
\@tfor\next:=abcdefghijklmnopqrstuvwxyzABCDEFGHIJKLMNOPQRSTUVWXYZ\do{
\def\command@factory#1{
\expandafter\def\csname b#1\endcsname{\mathbf{#1}}
\expandafter\def\csname fk#1\endcsname{\mathfrak{#1}}
\expandafter\def\csname bb#1\endcsname{\mathbb{#1}}
\expandafter\def\csname cl#1\endcsname{\mathcal{#1}}
\expandafter\def\csname bcl#1\endcsname{\mathbfcal{#1}}
}
\expandafter\command@factory\next
}

\newcommand{\tint}{\textstyle\int}

\makeatletter
\@namedef{subjclassname@2020}{\textup{2020} Mathematics Subject Classification}
\makeatother

\begin{document}

\title{A note on convergence of Wasserstein policy optimization}

\author{David \v{S}i\v{s}ka$^1$}
\email{d.siska@ed.ac.uk}
\author{Yufei Zhang$^2$}
\email{yufei.zhang@imperial.ac.uk}

\address{$^1$ School of Mathematics, University of Edinburgh, United Kingdom}
\address{$^2$ Department of Mathematics, Imperial College London, United Kingdom}

\begin{abstract}
Wasserstein Policy Optimization (WPO) is a recently proposed reinforcement learning algorithm that leverages Wasserstein gradient flows to optimize stochastic policies in continuous action spaces.
Despite its empirical success, the theoretical convergence properties of WPO in environments with continuous state and action spaces have yet to be fully established.
In this note, we argue that WPO within the framework of entropy-regularised Markov Decision Processes converges linearly.
This is done by leveraging recent advances in mean-field analysis for convergence of gradient flows using log-Sobole inequalities. 
Assuming existence of sufficiently regular solution to the gradient flow equation we demonstrate monotonic energy dissipation along the flow and establish a local log-Sobolev inequality.
Ultimately, these properties allow us to argue that the value function should converge linearly to the global optimum.
\end{abstract}

\maketitle 

\section{Introduction}

Reinforcement learning has achieved remarkable success in solving complex control problems with continuous action spaces.
A significant driver of this progress has been policy optimization, a family of methods that directly updates a policy's parameters via stochastic gradient  descent to minimize expected long-term costs.
Recently, Wasserstein Policy Optimization (WPO) was introduced as a novel actor-critic algorithm derived as an approximation to Wasserstein gradient flow over the space of all policies~\cite{pfau2025wasserstein}.
WPO bridges the gap between classic stochastic policy gradients and deterministic policy gradients: it exploits the gradient of the action-value function with respect to the action, yet it can be applied to arbitrary stochastic policies.

Despite its promising empirical performance, the theoretical convergence properties of WPO require a rigorous mathematical foundation.
The aim of this note is to answer the following question.

\vspace{1ex}
\begin{center}
{\em Can one expect WPO to converge and at what rate?}    
\end{center}
\vspace{1ex}

This note begins to answer by establishing the global linear convergence of WPO within the framework of entropy-regularised Markov Decision Processes (MDPs). 
Specifically, we consider an infinite horizon Markov decision model defined by a discrete and finite state space $S$, a continuous action space $A=\mathbb{R}^d$, a transition probability $P$, a bounded cost function $c$, and a discount factor $\gamma \in [0,1)$.

In this work, we postulate that the policy evolves according to a Wasserstein gradient flow driven by the flat derivative of the regularised value function.
We assume the gradient flow has solutions which are sufficiently well behaved so that, in particular, we can derive energy dissipation.
Our analysis extends techniques from the convex analysis of mean-field Langevin dynamics \cite{chizat2018global, chizat2018global, mei2018mean, hu2021mean, chizat2021convergence, nitanda2022convex, chizat2022mean}. We prove that energy dissipates monotonically along the flow. Furthermore, by demonstrating that a local log-Sobolev inequality holds along the gradient flow, we establish that the value function converges exponentially fast to the global optimal value.
While we carried out the analysis for a continuous-time gradient it should also be possible to establish convergence for discrete stepping schemes using the techniques proposed in~\cite{lascu2024linear}.
We believe that key ideas are easier to follow for the continuous time gradient flow.

The remainder of this note is organized as follows. Section \ref{sec:main_result} formally introduces the problem formulation and key tools, defining the entropy-regularised MDPs for continuous state and action spaces. Section 2 formulates the Wasserstein gradient flow continuity equation and presents our main theoretical results regarding its convergence analysis. Finally, Appendix A provides classical results and Bellman equations for entropy-regularised MDPs utilized throughout our proofs.

\section{Problem formulation and key tools} 
\label{sec:main_result}

\subsection{Entropy-regularised MDPs}
\label{sec:main_results_entropy_reg_MDPs}

In this section, we formulate the entropy-regularised MDPs with continuous state and action spaces. 
Let $S$ and $A$ be Polish spaces, $P\in \clP(S|S\times A)$, $c\in B_b(S\times A)$  and   $\gamma\in [0,1)$. 
The five-tuple  $(S,A,P,c,\gamma)$ determines an infinite horizon Markov decision model, where $S$ and $A$ represent the state and action spaces, respectively, $P$ represents the transition probability,  $c$ represents the cost function and $\gamma$ represents the discount factor.  
Let  $\Pi=\{\pi=\{\pi_n\}_{n\in \bbN_0}: \pi_n\in \clP(A|H_n)\}$ denote the set of (possibly non-Markovian) stochastic policies, where for each $n\in \bbN_0$, $H_n\coloneqq (S\times A)^{n}\times S$ is the space of admissible histories. 

Let $(\Omega:=(S\times A)^{\bbN_0},\mathcal{F})$ denote the canonical sample space, where $\clF=\clB(\Omega)$ is the corresponding Borel sigma-algebra. Elements of $\Omega$ are of the form $(s_0,a_0,s_1,a_1,\ldots)$ with $s_n\in S$ and $a_n\in A$ denoting the projections and called the state and action variables, at time $n\in \mathbb{N}_0$, respectively. By \cite[Proposition 7.28]{bertsekas2004stochastic}, for any given initial distribution $\rho\in\clP(S)$ and policy $\pi\in \Pi$, there exists a unique product probability measure $\bbP^{\pi}_{\rho}$ on $(\Omega,\mathcal{F})$ with expectation denoted $\mathbb{E}^{\pi}_{\rho}$ such that for all $n\in \bbN_0$, $B\in \clB(S)$ and $C\in  \clB(A)$, $\bbP_{\rho}^{\pi}(s_0\in B)=\rho(B)$ and 
\begin{equation} \label{eq:markov_like} 
\bbP_{\rho}^{\pi}(a_n\in C|h_n)=\pi_n(C|h_n),
\quad 
\bbP_{\rho}^{\pi}(s_{n+1}\in B|h_n, a_n)=P(B|s_n,a_n)\,, 
\end{equation}
where $h_n=(s_0,a_0,\ldots, s_{n-1}, a_{n-1}, s_{n})\in H_n$. In particular, if $\pi$ is a  Markov stochastic policy (i.e., $\pi_n\in \clP(A|S)$ for all $n\in \bbN_0$), then $\{s_n\}_{n\in \bbN_0}$ is a Markov process with kernel $\{P_{\pi,n}\}_{n\in \bbN_0}\in \clP(S|S)$ given by
\[
P_{\pi,n}(ds'|s)=\int_{A}P(ds'|s,a)\pi_n(da|s),
\quad \forall s\in S, n\in \bbN_0\,.
\]
For $s\in S$, we denote $\mathbb{E}^{\pi}_{s}=\mathbb{E}^{\pi}_{\delta_s}$, where $\delta_s\in \mathcal{P}(S)$ denotes the Dirac measure at $s\in S$.

Let $\mu \in \mathcal \clP(A)$ denote a reference measure and $\tau\in (0,\infty)$ denote a regularisation parameter. For each $\pi=\{\pi_n\}_{n\in \bbN_0} \in \Pi$ and $s\in S$, define the following regularised value function:
\begin{equation}
\label{eq:V_pi_tau}
V^{\pi}_{\tau}(s)= \bbE_{s}^{\pi}\left[\sum_{n=0}^\infty\gamma^n \Big(c(s_n,a_n) + \tau \operatorname{KL}(\pi_n(\cdot|h_n)|\mu)\Big)\right]\in \bbR\cup \{\infty\}\,,
\end{equation}
which may be infinite if  $\pi_n\not \in \clP_\mu(A|S)$ for some $n\in \mathbb{N}_0$, or if $\bbE_{s}^{\pi}\left[ \sum_{n=0}^\infty\gamma^n   \operatorname{KL}(\pi_n(\cdot|h_n)|\mu)\right]$ diverges. Since $c$ is bounded and $H_n\ni h_n\mapsto \operatorname{KL}(\pi_n(\cdot|h_n)|\mu)\in [0,\infty]$ is non-negative and measurable, $V^{\pi}_\tau: S\rightarrow \bbR\cup \{\infty\}$ is a well-defined measurable function. We define the optimal value function $V^*_{\tau}: S\rightarrow \bbR\cup \{\infty\}$ by
\begin{equation}
\label{eq:optimal_value}
V^*_{\tau}(s)=\inf_{\pi \in \Pi}V^{\pi}_{\tau}(s),
\quad \forall s\in S\,,
\end{equation}
and refer to  $\pi^* \in \Pi$ as an optimal policy if  $V^{\pi^*}_{\tau}(s)=V^*_{\tau}(s)$, for all  $s\in S$. 

One can prove that $V^*_{\tau}$ satisfies a dynamic programming principle (see Theorem~\ref{thm:DPP} for a  precise statement), which implies that $V^*_{\tau}\in B_b(S)$ and for all $s\in S$, 
\[
V^{\ast}_{\tau}(s)=-\tau\ln\int_{A}\exp\left(-
\frac{1}{\tau}Q^{\ast}_{\tau}(s,a)\right)\mu(da),
\]
where $Q^*_{\tau}\in B_b(S\times A)$ is defined by  
\[
Q^{*}_{\tau}(s,a)=c(s,a)+\gamma\int_S V_{\tau}^{*}(s')P(ds'|s,a)\,,
\quad \forall (s,a)\in S\times A\,.
\]
Moreover, there is an optimal policy $\pi^*_{\tau} \in \clP_{\mu}(A|S)$  given by
\begin{equation}
\label{eq:optimal_policy}
\pi^*_{\tau}(da|s) = \exp\left(-\frac{1}{\tau }(Q^{\ast}_{\tau}(s,a)-V^{\ast}_{\tau}(s))\right)\mu(da)\,,
\quad \forall s\in S.
\end{equation}
This suggests that, without loss of generality, it suffices to minimise \eqref{eq:V_pi_tau} over the class of stationary Markov policies that are equivalent to the reference measure $\mu$.

\begin{definition}
Let $\Pi_{\mu}$ denote the class of policies $\pi =\{\pi_n\}_{n\in \mathbb{N}_0} \in  \Pi $ such that $\pi_n\in \clP_\mu(A|S)$ for all $n\in \mathbb{N}_0$, and for which there exists $f\in B_b(S\times A)$ such that $\pi_n (da|s) =\frac{\exp\left(f(s,a)\right)}{ \int_{A}  \exp\left(f(s,a)\right) \mu(da)} \mu(da)$ for all $s\in S$ and $n\in \mathbb{N}_0$. In the sequel, we identify $ \Pi_\mu$ with the set  $\{\boldsymbol{\pi}(f)\mid f\in B_b(S\times A)\}\subset \clP_{\mu}(A|S)$, where $ \boldsymbol{\pi} :B_b(S\times A) \to \clP_{\mu}(A|S)$ is defined by 
\begin{equation}
\label{eq:pi_f_mu}
\boldsymbol{\pi}(f)(da|s)= \frac{e^{f(s,a)}}{\int_A e^{f(s,a')}\mu(d a')}\mu(d a),
\quad \forall f\in B_b(S\times A) \,.    
\end{equation}
\end{definition}

For each $\pi \in \Pi_{\mu} $, we define the $Q$-function $Q^{\pi}_{\tau}\in B_b(S\times A)$ by 
\begin{equation}\label{def:Q_fn}
Q^{\pi}_{\tau}(s,a)=c(s,a)+\gamma\int_S V_{\tau}^{\pi}(s')P(ds'|s,a)\,.
\end{equation}
Then due to the on-policy Bellman equation (see Lemma~\ref{lem:on_policy}), for all $\pi \in \Pi_{\mu} $ and $s\in S$,
\begin{equation}\label{eq:on_policy}
V^{\pi}_{\tau}(s)=\int_{A}\left(Q_\tau^\pi(s,a)+\tau \ln \frac{\mathrm{d} \pi}{\mathrm{d} \mu}(a|s)\right)\pi(da|s)\,.
\end{equation}
For each $\pi\in \clP(A|S)$, we define the occupancy kernel $d^{\pi}\in\clP(S|S)$ by
\begin{equation}
\label{eq:occupancy_s}
d^{\pi}(ds'|s)=(1-\gamma)\sum_{n=0}^{\infty}\gamma^nP^n_{\pi}(ds'|s)\,,
\end{equation}
where $P^n_{\pi}$ is the $n$-times  product of the kernel $P_{\pi}$ with $P^0_{\pi}(ds'|s)\coloneqq \delta_s(ds')$ and the convergence is understood in $b\clM(S|S)$. For a given initial distribution $\rho\in \clP(S)$, we define
\begin{equation}
\label{eq:occupancy_rho}
V^{\pi}_{\tau}(\rho)=\int_{S}V^{\pi}_{\tau}(s) \rho(ds) \quad \textnormal{and} \quad 
d^{\pi}_{\rho}(ds)=\int_{S}d^{\pi}(ds|s')\rho(ds')\,.
\end{equation}

The flat derivative 
is given by
\begin{equation}
\label{eq:delta_V_delta_pi-intro}
\frac{\delta V^{\pi}_{\tau}(\rho)}{\delta \pi}\bigg|_{\nu}(s,a)= \left(Q^{\pi}_\tau (s,a) + \tau \ln\frac{\mathrm{d}  \pi}{\mathrm{d}  \mu} (s,a) -V^{\pi}_\tau(s)\right)\frac{
\mathrm{d}
d^{\pi}_{\rho}}{\mathrm{d} \nu}(s)\,,
\end{equation}
where $d^{\pi}_{\rho}\in \clP(S)$ is the occupancy measure associated with $\pi$.
The flat derivative \eqref{eq:delta_V_delta_pi-intro}
generalises the notation of the flat derivative applied to probability measures to encompass probability transition kernels.

\section{Wasserstein Gradient Flow}
Throughout this section, 
we assume the action space $A = \bbR^d$
for some $d\in \bbN$ and $S$ is discrete and finite. 
Let $\lambda$ denote the Lebesgue measure on $A = \mathbb R^d$.
We start by postulating that starting with the Markov policy $\pi^0 \in \Pi_\mu$ the policies will evolve according to the continuity equation
\begin{equation}
\partial_t \tfrac{\mathrm d\pi_t}{\mathrm d\lambda} = \nabla_a \cdot (E_t \tfrac{\mathrm d\pi_t}{\mathrm d\lambda}) \,,\,\,\, t \in (0,\infty)\,,\,\,\, \tfrac{\mathrm d\pi_0}{\mathrm d\lambda} = \tfrac{\mathrm d\pi_0}{\mathrm d\mu} \tfrac{\mathrm d\mu}{\mathrm d\lambda}  = \tfrac{\mathrm d\pi^0}{\mathrm d\mu} e^{-U}\,\,\text{given.}
\end{equation}
We will abuse notation and for $\pi, \pi' \in \Pi_\mu$ and $s\in S$ write 
\begin{equation}
\operatorname{KL}(\pi|\pi')(s) := \operatorname{KL}(\pi(\cdot|s)|\pi'(\cdot|s))\,.
\end{equation}
Moreover, as $\tau > 0$ is fixed we shall drop it from various subscripts, so that, in particular, instead of $V^\pi_\tau$ we will write $V^\pi$.

Heuristically, with~\eqref{eq:delta_V_delta_pi-intro} and chain rule, we have
\begin{equation}
\begin{split}
\partial_t V^{\pi_t}
& = \frac{1}{1-\gamma}\int_S \int_A \frac{\delta V^{\pi_t}}{\delta \pi}(s,a) \partial_t \pi_t(da|s) \,d^{\pi_t}_\rho(ds)\\
& = \frac{1}{1-\gamma}\int_S \int_A \frac{\delta V^{\pi_t}}{\delta \pi}(s,a) \nabla_a (E_t \tfrac{\mathrm d\pi_t}{\mathrm d\lambda})(s,a)\lambda(da) \,d^{\pi_t}_\rho(ds)\\ 
& = -\frac{1}{1-\gamma}\int_S \int_A \nabla_a \frac{\delta V^{\pi_t}}{\delta \pi}(s,a) E_t(s,a) \tfrac{\mathrm d\pi_t}{\mathrm d\lambda}(a|s)\lambda(da) \,d^{\pi_t}_\rho(ds)\,. 
\end{split}	
\end{equation} 
We thus see that choosing $E_t = \nabla_a \frac{\delta V^{\pi_t}}{\delta \pi}(s,a)$ leads to energy dissipation
and thus the continuity equation can be replaced with
\begin{equation}
\label{eq:wasserstein_for_measure}
\begin{split}
\partial_t \tfrac{\mathrm d\pi_t}{\mathrm d\lambda} 
= \nabla_a \cdot \Big(\tfrac{\mathrm d\pi_t}{\mathrm d\lambda} \nabla_a \tfrac{\delta V^{\pi_t}}{\delta \pi} \Big) 
= \nabla_a \cdot \Big(\tfrac{\mathrm d\pi_t}{\mathrm d\lambda} \nabla_a Q^{\pi_t} + \tfrac{\mathrm d\pi_t}{\mathrm d\lambda} \tau \nabla_a \ln \tfrac{\textrm{d}\pi_t}{\textrm{d}\mu} \Big)\,. 
\end{split}
\end{equation}
for $t \in (0,\infty)$ with $\tfrac{\mathrm d\pi_0}{\mathrm d\lambda} = \tfrac{\mathrm d\pi_0}{\mathrm d\mu}e^{-U}$.	
Hence
\begin{equation}
\label{eq:wasserstein_for_density}
\begin{split}
\partial_t \tfrac{\mathrm d\pi_t}{\mathrm d\lambda} 
= \nabla_a \cdot \Big(\tfrac{\mathrm d\pi_t}{\mathrm d\lambda} \nabla_a Q^{\pi_t} + \tau \tfrac{\mathrm d\pi_t}{\mathrm d\lambda} \nabla_a U \Big) + \tau \Delta_a \tfrac{\mathrm d\pi_t}{\mathrm d\lambda} \,. 
\end{split}
\end{equation}
This has the stochastic representation 
\begin{equation}
\left\{
\begin{split}
d\alpha_t(s) & = - \big(\nabla_a Q^{\pi_t} + \tau\nabla U\big)(\alpha_t, s)	+ \sqrt{2\tau}dB_t(s)\,,\,\,t\geq 0\,,\,\, \alpha_0(s) \sim \tfrac{\mathrm d\pi^0}{\mathrm d\lambda} = \tfrac{\mathrm d\pi^0}{\mathrm d\mu}e^{-U}\,,  \\
\pi_t(\cdot|s) & = \text{Law}(\alpha_t(s))\,.
\end{split}	
\right.
\end{equation}

We will now extend a method applicable to minimisation of convex functions regularised with entropy presented in~\cite{nitanda2022convex}.
The analysis is more involved than the 
static minimisation problem studied in \cite{nitanda2022convex,chizat2022mean}.
In particular, 
the coefficient $ Q^{\pi}_\tau$ of   \eqref{eq:wasserstein_for_measure} 
may   be non-differentiable 
and unbounded with respect to $\pi$,
due to    the  $\rm KL$-divergence and the positive discount factor $\gamma$ (see \eqref{def:Q_fn}).
Moreover, 
as a given policy may induce a state  distribution that is a different from the optimal one, 
one has to control  such a distribution shift throughout the flow. 

\subsection{Convergence analysis }

We define the proximal policy 
\begin{equation}
\label{eq:proximal_policy}
\Phi[\pi'](da|s) := \underset{m \in \mathcal P(A)}{\operatorname{arg\,min}} \bigg(\int_A Q^{\pi'}(s,a) \,m(da) + \tau \operatorname{KL}(m|\mu) \bigg)\,.
\end{equation}  
What we're minimising over is exactly the right-hand-side of the policy Bellman equation, see Lemma~\ref{lem:on_policy} (with policy $\pi'$).
It is exactly the minimisation step one would be doing in the policy iteration algorithm.

\begin{proposition}[Entropy sandwich]
\label{propn. entropy sandwich}
Let $\pi'\in \Pi_\mu$.
Then for $\Phi[\pi']$ given by~\eqref{eq:proximal_policy} we have 
\begin{equation}
\frac{
\tau}{1-\gamma}\int_S \operatorname{KL}(\pi'|\pi^\ast)(s) d^{\pi'}_\rho(ds)	
= V^{\pi'}(\rho) - V^{\pi^\ast}(\rho) 
\leq \frac{\tau}{1-\gamma}\int_S  \operatorname{KL}(\pi'|\Phi[\pi'])(s)\, d^{\pi^\ast}_\rho(ds)\,. 
\end{equation} 

\end{proposition}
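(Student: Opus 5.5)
The natural tool is the performance difference lemma for entropy-regularised MDPs: for $\pi,\pi'\in\Pi_\mu$,
\[
V^{\pi}(\rho)-V^{\pi'}(\rho)=\frac{1}{1-\gamma}\int_S\Big[\int_A \big(Q^{\pi'}(s,a)+\tau\ln\tfrac{\mathrm d\pi}{\mathrm d\mu}(a|s)\big)\pi(da|s)-V^{\pi'}(s)\Big]d^{\pi}_\rho(ds).
\]
I would derive this from the on-policy Bellman equation \eqref{eq:on_policy} and the definition \eqref{def:Q_fn}. Write $V^\pi(s)-V^{\pi'}(s)$ as the bracket above plus $\gamma\int_A\int_S (V^\pi-V^{\pi'})(s')P(ds'|s,a)\pi(da|s)$. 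Iterating this identity and summing the geometric series yields the occupancy kernel \eqref{eq:occupancy_s}. Boundedness of $V^\pi,V^{\pi'}$ for policies in $\Pi_\mu$ (from Lemma~\ref{lem:on_policy}) justifies the telescoping.

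For the \emph{equality}, I apply the lemma with $\pi=\pi'$ and $\pi'=\pi^\ast$. By \eqref{eq:optimal_policy}, $Q^{\ast}(s,a)=V^\ast(s)+\tau\ln\frac{\mathrm d\mu}{\mathrm d\pi^\ast}(a|s)$. The bracket then becomes $\tau\int_A\ln\frac{\mathrm d\pi'}{\mathrm d\pi^\ast}\,d\pi'=\tau\operatorname{KL}(\pi'|\pi^\ast)(s)$, which is exactly the left-hand identity. Here I use that $\pi^\ast$ is optimal, so $V^{\pi^\ast}=V^\ast$ and $Q^{\pi^\ast}=Q^\ast$.

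For the \emph{inequality}, I apply the lemma in the reverse direction: $\pi=\pi^\ast$ and $\pi'$ fixed, integrating against $d^{\pi^\ast}_\rho$. This gives
\[
V^{\pi'}(\rho)-V^{\pi^\ast}(\rho)=\frac{1}{1-\gamma}\int_S\Big[V^{\pi'}(s)-\int_A\big(Q^{\pi'}+\tau\ln\tfrac{\mathrm d\pi^\ast}{\mathrm d\mu}\big)d\pi^\ast\Big]d^{\pi^\ast}_\rho(ds).
\]
By the Gibbs variational principle, the minimiser in \eqref{eq:proximal_policy} is $\Phi[\pi'](da|s)\propto e^{-Q^{\pi'}(s,a)/\tau}\mu(da)$. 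For every $m$ this gives
\[
\int_A Q^{\pi'}\,dm+\tau\operatorname{KL}(m|\mu)=-\tau\ln Z(s)+\tau\operatorname{KL}(m|\Phi[\pi']),\qquad Z(s)=\int_A e^{-Q^{\pi'}(s,a)/\tau}\mu(da).
\]
Taking $m=\pi^\ast(\cdot|s)$, the subtracted term is at least $-\tau\ln Z(s)$. Taking $m=\pi'(\cdot|s)$ and using \eqref{eq:on_policy}, $V^{\pi'}(s)=-\tau\ln Z(s)+\tau\operatorname{KL}(\pi'|\Phi[\pi'])(s)$. Hence the bracket is at most $\tau\operatorname{KL}(\pi'|\Phi[\pi'])(s)$, and the upper bound follows.

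The main obstacle is a rigorous performance difference identity, specifically the integrability needed to telescope the Bellman recursions and interchange sums and integrals. This is handled by boundedness of $Q^{\pi'}$ and of $V^{\pi'},V^{\ast}$ for $\pi'\in\Pi_\mu$, together with finiteness of the KL terms. I would also check that $\Phi[\pi']\in\Pi_\mu$ and that $\pi^\ast$ has finite KL relative to $\Phi[\pi']$, so that no term is of the form $\infty-\infty$. Both follow because all log-densities involved are bounded.
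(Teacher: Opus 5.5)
Your proof is correct and follows the paper's route. The equality comes from the performance difference lemma with $\pi^\ast$ as the reference policy, which the paper phrases as $\frac{\delta V^{\pi^\ast}}{\delta\pi}=0$. The inequality comes from the same lemma with $\pi=\pi^\ast$ plus the Gibbs variational principle for $\Phi[\pi']$, which is exactly the paper's pointwise minimisation over $\pi$; your exact identity also records the slack that the paper drops, namely that the bracket equals $\tau\operatorname{KL}(\pi'|\Phi[\pi'])(s)-\tau\operatorname{KL}(\pi^\ast|\Phi[\pi'])(s)$.
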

\begin{proof}

From~\cite[Lemma 1.4.3]{dupuis1997weak} we know that
\begin{equation}
\label{eq:formula_for_proximal_policy}
\Phi[\pi'](da|s) =  \frac{1}{Z_{\pi'}(s)} \exp\bigg(-\frac1\tau (Q^{\pi'}(s,a) - V^{\pi'}(s)) \bigg)\,\mu(da)\,,
\end{equation} 
where 
\begin{equation}
Z_{\pi'}(s) := \int_A \exp\bigg(-\frac1\tau (Q^{\pi'}(s,a') - V^{\pi'}(s))\bigg) \mu(da')\,.	
\end{equation}

The flat derivative of the objective can be written in terms of the proximal policy step 
\begin{equation}
\label{eq:flat_derivative_differenlty}
\begin{split}
\tfrac{\delta V^{\pi'}}{\delta \pi} & = Q^{\pi'} - V^{\pi'} + \tau \ln \tfrac{\mathrm d \pi'}{\mathrm d \mu}	
= \tau \ln \tfrac{\mathrm d \pi'}{\mathrm d \mu} -\tau \ln \exp \big(-\tfrac1\tau (Q^{\pi'} - V^{\pi'})\big) + \tau \ln Z_{\pi'} - \tau \ln  Z_{\pi'} \\
& =  \tau \ln \tfrac{\mathrm d \pi'}{\mathrm d \mu} - \tau \ln \tfrac{\mathrm d \Phi[\pi']}{\mathrm d \mu} - \tau \ln  Z_{\pi'} 
= \tau \ln \tfrac{\mathrm d \pi'}{\mathrm d \Phi[\pi']} - \tau \ln Z_{\pi'}\,. 
\end{split}
\end{equation}
Let us now note that Lemma~\ref{lem:performance_diff} implies that for any $\pi,\pi'$ and any $\rho$ that  
\begin{equation}
\begin{split}
& V^{\pi}_\tau(\rho)  = V^{\pi'}_\tau(\rho) +  \frac{1}{1-\gamma}\int_S \bigg[\int_A \frac{\delta V^{\pi'}}{\delta \pi}(s,a) (\pi-\pi')(da|s) + \tau    \operatorname{KL}(\pi|\pi')(s) \bigg]d^{\pi}_\rho(ds)\\
& = V^{\pi'}_\tau(\rho) +  \frac{1}{1-\gamma}\int_S \bigg[\int_A \bigg(Q^{\pi'}_\tau (s,a) + \tau \ln\frac{\mathrm{d}\pi'}{\mathrm{d}  \mu} (s,a) -V^{\pi'}_\tau(s)\bigg) (\pi-\pi')(da|s) + \tau    \operatorname{KL}(\pi|\pi')(s) \bigg]d^{\pi}_\rho(ds)\\
& = V^{\pi'}_\tau(\rho) +  \frac{1}{1-\gamma}\int_S \bigg[\int_A \bigg(Q^{\pi'}_\tau (s,a) + \tau \ln\frac{\mathrm{d}\pi'}{\mathrm{d}  \mu} (s,a)\bigg)  (\pi-\pi')(da|s) + \tau    \operatorname{KL}(\pi|\pi')(s) \bigg]d^{\pi}_\rho(ds)\\
& = V^{\pi'}_\tau(\rho) +  \frac{1}{1-\gamma}\int_S \bigg[\int_A \bigg(Q^{\pi'}_\tau (s,a) + \tau \ln\frac{\mathrm{d}\pi'}{\mathrm{d}  \mu} (s,a)\bigg)  (\pi-\pi')(da|s) + \int_A \tau \ln \frac{\mathrm d\pi}{\mathrm d\pi'}(s,a)\pi(da|s)\bigg]d^{\pi}_\rho(ds)\\
& = V^{\pi'}_\tau(\rho) +  \frac{1}{1-\gamma}\int_S \bigg[\int_A Q^{\pi'}_\tau (s,a)  (\pi-\pi')(da|s) + \tau \operatorname{KL}(\pi|\mu)(s)-\tau\int_A\ln \frac{\mathrm d\pi'}{\mathrm d \mu}(s,a)\pi'(da|s)\bigg]d^{\pi}_\rho(ds)\\
\end{split}
\end{equation}
If we minimise the term inside the integral $\int_S \cdots d^{\pi}_\rho(ds)$ for every $s\in S$ over $\pi$ and recall the proximal policy~\eqref{eq:proximal_policy} we get, for any $\pi,\pi'$ and any $\rho$ that  
\begin{equation}
\label{eq:from_perf_diff_2}
\begin{split}
& V^{\pi}_\tau(\rho) \geq V^{\pi'}_\tau(\rho) + \frac{1}{1-\gamma}\int_S \bigg[\int_A \frac{\delta V^{\pi'}}{\delta \pi}(s,a) (\Phi[\pi']-\pi')(da|s) + \tau \operatorname{KL}(\Phi[\pi']|\pi')(s) \bigg]d^{\pi}_\rho(ds).
\end{split}
\end{equation}
From~\eqref{eq:flat_derivative_differenlty} we have for any $\pi,\pi'$ that  
\begin{equation}
\begin{split}
& \int_A \frac{\delta V^{\pi'}}{\delta \pi}(s,a) (\Phi[\pi']-\pi')(da|s) + \tau \operatorname{KL} (\Phi[\pi']|\pi')(s)\\
& = \tau \int_A  \ln \frac{\mathrm d \pi'}{\mathrm d \Phi[\pi']} (s,a) (\Phi[\pi']-\pi')(da|s) + \tau  \int_A \ln \frac{\mathrm d \Phi[\pi']}{\mathrm d \pi'}(s,a)\Phi[\pi'](da|s)\\ 
& = - \tau \int_A  \ln \frac{\mathrm d \Phi[\pi']}{\mathrm d \pi'} (s,a) (\Phi[\pi']-\pi')(da|s) + \tau  \int_A \ln \frac{\mathrm d \Phi[\pi']}{\mathrm d \pi'}(s,a)\Phi[\pi'](da|s)\\ 
& = \tau \int_A  \ln \frac{\mathrm d \Phi[\pi']}{\mathrm d \pi'} (s,a) \pi'(da|s)\pi'(da|s) = - \tau \operatorname{KL}(\pi'|\Phi[\pi'])(s) \,. 
\end{split}	
\end{equation}
From this and~\eqref{eq:from_perf_diff_2} we get for any $\pi,\pi'$ and any $\rho$ that
\begin{equation}
\label{eq:from_perf_diff_3}
\begin{split}
& V^{\pi}_\tau(\rho) \geq V^{\pi'}_\tau(\rho) - \frac{\tau}{1-\gamma}\int_S  \operatorname{KL}(\pi'|\Phi[\pi'])(s)\, d^{\pi}_\rho(ds)\,.
\end{split}
\end{equation}
Hence for any $\pi,\pi'$ and any $\rho$ we have
\begin{equation}
\label{eq:V_in_terms_of_pi_1}
V^{\pi'}_\tau(\rho) - V^{\pi}_\tau(\rho) \leq \frac{\tau}{1-\gamma}\int_S  \operatorname{KL}(\pi'|\Phi[\pi'])(s)\, d^{\pi}_\rho(ds)\,.	
\end{equation}

From Theorem~\ref{thm:DPP} we know that for all $s,a$ we have
\begin{equation}
\frac{\delta V^{\pi^\ast}}{\delta \pi} = Q^{\pi^\ast} - V^{\pi^\ast} + \tau \ln \frac{\mathrm d\pi^\ast}{\mathrm d \mu} = 0\,. 
\end{equation} 
From this and Lemma~\ref{lem:performance_diff} we thus get for all $\pi'$ and $\rho$ that
\begin{equation}
V^{\pi'}(\rho) - V^{\pi^\ast}(\rho) = \frac{
\tau}{1-\gamma}\int_S \operatorname{KL}(\pi'|\pi^\ast)(s) d^{\pi'}_\rho(ds)\,.
\end{equation}
This and~\eqref{eq:V_in_terms_of_pi_1} with $\pi = \pi^\ast$ thus lead to 
\begin{equation}
\frac{
\tau}{1-\gamma}\int_S \operatorname{KL}(\pi'|\pi^\ast)(s) d^{\pi'}_\rho(ds)	
= V^{\pi'}(\rho) - V^{\pi^\ast}(\rho) 
\leq \frac{\tau}{1-\gamma}\int_S  \operatorname{KL}(\pi'|\Phi[\pi'])(s)\, d^{\pi^\ast}_\rho(ds)\,. 
\end{equation} 	
This concludes the proof.
\end{proof}

\begin{assumption}
\label{ass:nice_solution}
For each $s\in S$ the gradient flow~\eqref{eq:wasserstein_for_density} has a solution  $\tfrac{\mathrm d \pi}{\mathrm d\lambda}(\cdot|s) \in C^{2,1}(A\times (0,\infty))$.
The map $s\mapsto \tfrac{\mathrm d \pi}{\mathrm d\lambda}(\cdot|s)$ is measurable.
There are $K(s)>0$, $\delta(s)>0$ and for all $t > 0$ we have
\[
|\tfrac{\mathrm d \pi_t}{\mathrm d\lambda}(\cdot|s)| + |\partial_t \tfrac{\mathrm d \pi_t}{\mathrm d\lambda}(\cdot|s)| + |\nabla_a\tfrac{\mathrm d \pi_t}{\mathrm d\lambda}(\cdot|s) | \leq K(s) t^{(-d+2)/2}\exp(-\tfrac{1}{2t}\delta(s)|a|^2)\
\]
and $\sup_{t\geq 0}\int_S K(s) d^{\pi_t}_\rho(ds) < \infty$
\end{assumption}

The conditions in Assumption~\ref{ass:nice_solution} can most likely be obtained by refining arguments in~\cite{leahy2022convergence}.
These themselves rest on~\cite{bogachev2015fokker} and~\cite{lasota2013chaos}. 
The main difficulty is that the drift is only locally Lipschitz in policy. 

\begin{proposition}[Energy dissipation and log-Sobolev along the flow]
Let Assumption~\ref{ass:nice_solution} hold. 
Let the initial condition $\tfrac{\mathrm d\pi_0}{\mathrm d\lambda} = \tfrac{\mathrm d\pi_0}{\mathrm d\mu}e^{-U}$.
Then 
\begin{equation}
\label{eq:dV_dt}
\partial_t V^{\pi_t}(\rho) = -(1-\gamma)^{-1}\int_S \int_A \big|\nabla_a \tfrac{\delta V^{\pi_t}}{\delta \pi}(s,a) \big|^2 \tfrac{\mathrm d\pi_t}{\mathrm d\lambda}(a|s)\lambda(da)\,d^{\pi_t}_\rho(ds)\leq 0\,,\,\,t\geq 0\,.
\end{equation}
Moreover for each $t\geq 0$
\[
\|V^{\pi_t}\|_{B_b(S)} \leq \max\Big(\|V^{\pi_0}\|_{B_b(S)}, \|V^*\|_{B_b(S)}\Big)
\]
and
\[
\|Q^{\pi_t}\|_{B_b(S\times A)} \leq \|c\|_{B_b(S\times A)} + \gamma\max\Big(\|V^{\pi_0}\|_{B_b(S)}, \|V^*\|_{B_b(S)}\Big)\,.
\]
Finally, suppose $\mu(da) = e^{-U}\lambda(da)$ satisfies the log-Sobolev inequality.
Then following log-Sobolev inequality holds: there is $\alpha>0$ such that 
for all $t>0 $ and all $s\in S$, 
\begin{equation}
\label{eq:lsi_along_flow}
\operatorname{KL}(\pi_t|\pi_{\pi_t})(s) \leq \frac{1}{2\alpha} \int_A \bigg|\nabla_a \ln \frac{\mathrm d \pi_t}{\mathrm d \pi_{\pi_t}} \bigg|^2 \pi_t(da|s)\,.
\end{equation}
\end{proposition}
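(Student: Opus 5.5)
We establish the three claims in order: energy dissipation, then the a priori bounds on $V^{\pi_t}$ and $Q^{\pi_t}$, and finally the log-Sobolev inequality \eqref{eq:lsi_along_flow}.

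\emph{Step 1: energy dissipation.} We make rigorous the heuristic chain-rule computation preceding \eqref{eq:wasserstein_for_measure}. Fix $t>0$ and $h>0$. Apply Lemma~\ref{lem:performance_diff} with $\pi=\pi_{t+h}$ and $\pi'=\pi_t$. This writes $V^{\pi_{t+h}}(\rho)-V^{\pi_t}(\rho)$ as $(1-\gamma)^{-1}\int_S\big[\int_A \tfrac{\delta V^{\pi_t}}{\delta\pi}(\pi_{t+h}-\pi_t)(da|s)+\tau\operatorname{KL}(\pi_{t+h}|\pi_t)(s)\big]d^{\pi_{t+h}}_\rho(ds)$.
\begin{itemize}
\item Divide by $h$ and let $h\downarrow 0$.
\item The KL term is $O(h^2)$, since it is a relative entropy between densities that are $C^1$ in time, with bounds provided by Assumption~\ref{ass:nice_solution}.
\item $d^{\pi_{t+h}}_\rho\to d^{\pi_t}_\rho$ because $S$ is finite and $P_{\pi_{t+h}}\to P_{\pi_t}$ in total variation.
\item The remaining term tends to $\int_S\int_A \tfrac{\delta V^{\pi_t}}{\delta\pi}\,\partial_t\tfrac{\mathrm d\pi_t}{\mathrm d\lambda}\,d\lambda\, d^{\pi_t}_\rho(ds)$.
\end{itemize}
We then substitute \eqref{eq:wasserstein_for_measure} and integrate by parts in $a$. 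Boundary terms vanish by the Gaussian-type decay in Assumption~\ref{ass:nice_solution}. The integrability of $\ln\tfrac{\mathrm d\pi_t}{\mathrm d\mu}$ against these densities, together with its gradient, also follows from the assumed bounds and from $U$ having at most quadratic growth. The outcome is \eqref{eq:dV_dt}. Dominated convergence is justified using $\sup_t\int_S K\,d^{\pi_t}_\rho<\infty$. Since this holds for every $\rho$, taking $\rho=\delta_s$ gives monotonicity of $V^{\pi_t}(s)$ for each $s$.

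\emph{Step 2: a priori bounds.} Because $V^{\pi_t}(s)$ is nonincreasing in $t$, we have $V^{\pi_t}(s)\le V^{\pi_0}(s)$. Because $\pi^*$ is optimal, $V^{\pi_t}(s)\ge V^*(s)$. Hence $|V^{\pi_t}(s)|\le\max(\|V^{\pi_0}\|,\|V^*\|)$. Inserting this into \eqref{def:Q_fn} gives the bound on $Q^{\pi_t}$.

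\emph{Step 3: log-Sobolev inequality.} Here $\pi_{\pi_t}$ denotes the proximal policy $\Phi[\pi_t]$ of \eqref{eq:formula_for_proximal_policy}. Its density with respect to $\mu$ is proportional to $\exp(-Q^{\pi_t}(s,\cdot)/\tau)$. Step 2 shows that $Q^{\pi_t}/\tau$ is bounded uniformly in $t$ and $s$, with oscillation at most $2(\|c\|+\gamma M)/\tau$, where $M=\max(\|V^{\pi_0}\|,\|V^*\|)$. By the Holley--Stroock perturbation lemma, $\Phi[\pi_t](\cdot|s)$ then satisfies a log-Sobolev inequality with constant $\alpha=\alpha_\mu\exp(-2(\|c\|+\gamma M)/\tau)$, independent of $t$ and $s$. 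Applying this log-Sobolev inequality to the density $f=\tfrac{\mathrm d\pi_t}{\mathrm d\Phi[\pi_t]}$ yields \eqref{eq:lsi_along_flow}.

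\emph{Main obstacle.} The hardest part is the rigorous justification in Step 1. The difficulties are the differentiability of $t\mapsto d^{\pi_t}_\rho$ and $t\mapsto V^{\pi_t}$, the $o(h)$ control of the KL term, and the integration by parts involving $\ln\tfrac{\mathrm d\pi_t}{\mathrm d\mu}$, whose growth must be controlled. I would first try obtaining a lower Gaussian bound from the heat-kernel form of Assumption~\ref{ass:nice_solution}, and otherwise argue via the SDE representation and Itô's formula.
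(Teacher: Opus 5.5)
Your proposal follows the paper's proof essentially step by step: the chain rule plus integration by parts for the dissipation, monotonicity with $\rho=\delta_s$ giving $V^\ast\le V^{\pi_t}\le V^{\pi_0}$ and hence the $V$ and $Q$ bounds, and Holley--Stroock for $\Phi[\pi_t]\propto e^{-Q^{\pi_t}/\tau}\mu$ (which is indeed what $\pi_{\pi_t}$ denotes). The only difference is that you derive the chain rule from the performance-difference lemma where the paper cites an external result; this is a nice self-contained touch, but the inputs it needs beyond Assumption~\ref{ass:nice_solution} (growth of $U$, and a lower bound on $\tfrac{\mathrm d\pi_t}{\mathrm d\lambda}$ for your $O(h^2)$ KL remainder, where $o(h)$ would suffice and follows from differentiability of $t\mapsto\int\tfrac{\mathrm d\pi_t}{\mathrm d\lambda}\ln\tfrac{\mathrm d\pi_t}{\mathrm d\lambda}\,d\lambda$) are exactly what the paper also leaves to that citation.
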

\begin{proof}
Under Assumption~\ref{ass:nice_solution}  the chain rule and integration-by-parts can be justified rigorously along the lines of~\cite[Theorem A.9]{leahy2022convergence} and yield
\begin{equation}
\begin{split}
\partial_t V^{\pi_t}
& = \tfrac{1}{1-\gamma}\tint_S \tint_A \frac{\delta V^{\pi_t}}{\delta \pi}(s,a) \partial_t \pi_t(da|s) \,d^{\pi_t}_\rho(ds)\\
& = \tfrac{1}{1-\gamma}\tint_S \tint_A \tfrac{\delta V^{\pi_t}}{\delta \pi}(s,a) \nabla_a \cdot ( \tfrac{\mathrm d\pi_t}{\mathrm d\lambda} \nabla_a \tfrac{\delta V^{\pi_t}}{\delta \pi})(s,a)\lambda(da) \,d^{\pi_t}_\rho(ds)\\ 
& = -\tfrac{1}{1-\gamma}\tint_S \tint_A \big|\nabla_a \frac{\delta V^{\pi_t}}{\delta \pi}(s,a)\big|^2  \tfrac{\mathrm d\pi_t}{\mathrm d\lambda}(a|s)\lambda(da) \,d^{\pi_t}_\rho(ds)\leq 0\,. 
\end{split}	
\end{equation} 
This shows~\eqref{eq:dV_dt}. 
As the value function is decreasing  for all $s\in S$ and $t\geq0$  we know that
\[
-\|V^\ast\|_{B_b(S)} \leq V^\ast(s) \leq V^{\pi_t}(s) \leq V^{\pi_0}(s) \leq \|V^{\pi_0}\|_{B_b(S)}\,.
\]
Hence, taking supremum over $s\in S$ we get for all $t\geq 0$ that
\[
-\|V^\ast\|_{B_b(S)} \leq \|V^{\pi_t}\|_{B_b(S)} \leq \|V^{\pi_0}\|_{B_b(S)}\,.
\]
This shows the second claim holds and~\eqref{def:Q_fn}, i.e. the definition of Q-function, leads to the third.
Finally, to prove the last claim, recall that~\eqref{eq:formula_for_proximal_policy} says
\[
\Phi[\pi_t](da|s) =  \tfrac{1}{Z_{\Phi[\pi_t]}(s)} \exp\big(-\tfrac1\tau (Q^{\pi_t}(s,a) - V^{\pi_t}(s)) \big)\,\mu(da)\,.
\]
Since we are assuming $\mu$ satisfies the log-Sobolev inequality and since we've shown $\sup_{t\geq0}\|Q^{\pi_t} - V^{\pi_t}\|_{B_b(S\times A)}<\infty$ we can use the Holley--Stroock criteria to conclude that log-Sobolev holds locally along the gradient flow~\eqref{eq:wasserstein_for_density} as claimed.  
\end{proof}

\begin{theorem}
Let Assumption~\ref{ass:nice_solution} hold.
Let $\bar \kappa := \sup_{s\in S} \frac{\mathrm d \rho}{\mathrm d d^{\pi^\ast}_\rho}(s)$ 
and 
$\underline \kappa := \inf_{s\in S} \frac{\mathrm d \rho}{\mathrm d d^{\pi^\ast}_\rho}(s)$.
Then 
\begin{equation}
\label{eq gronwall 2}
0\leq V^{\pi_t}(\rho) - V^{\pi^\ast}(\rho) 
\leq \bar \kappa \bigg(\int_S (V^{\pi_0}(s) - V^{\pi^\ast}(s))\,d^{\pi^\ast}_\rho(ds)\bigg)e^{-2\underline\kappa \alpha\tau t}\,.	
\end{equation}
\end{theorem}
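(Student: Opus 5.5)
The plan is to run a Gronwall argument on the weighted gap
\[
\mathcal E_t := \int_S \big(V^{\pi_t}(s)-V^{\pi^\ast}(s)\big)\,d^{\pi^\ast}_\rho(ds),
\]
rather than on $V^{\pi_t}(\rho)-V^{\pi^\ast}(\rho)$ directly. The $d^{\pi^\ast}_\rho$ weighting is natural because the upper bound in Proposition~\ref{propn. entropy sandwich} involves $d^{\pi^\ast}_\rho$. The final comparison with $\rho$ then produces the factor $\bar\kappa$:
\[
V^{\pi_t}(\rho)-V^{\pi^\ast}(\rho)=\int_S (V^{\pi_t}-V^{\pi^\ast})\tfrac{\mathrm d\rho}{\mathrm d d^{\pi^\ast}_\rho}\,d^{\pi^\ast}_\rho \le \bar\kappa\,\mathcal E_t .
\]
This uses $V^{\pi_t}\ge V^{\pi^\ast}$ pointwise, and also gives the lower bound $0$.

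\textbf{Step 1 (dissipation of $\mathcal E_t$).} Apply the energy dissipation identity \eqref{eq:dV_dt} with initial distribution $\rho$ replaced by $\rho' := d^{\pi^\ast}_\rho$. The proposition holds for an arbitrary initial law, since Assumption~\ref{ass:nice_solution} is stated per $s$ and $S$ is finite. This gives
\[
\partial_t\mathcal E_t=-(1-\gamma)^{-1}\int_S\int_A\big|\nabla_a\tfrac{\delta V^{\pi_t}}{\delta\pi}\big|^2\,\pi_t(da|s)\,d^{\pi_t}_{\rho'}(ds).
\]
By \eqref{eq:flat_derivative_differenlty}, $\nabla_a \tfrac{\delta V^{\pi_t}}{\delta\pi}=\tau\nabla_a\ln\tfrac{\mathrm d\pi_t}{\mathrm d\Phi[\pi_t]}$, because $Z$ does not depend on $a$. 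The inner integral is therefore $\tau^2$ times the relative Fisher information. The local log-Sobolev inequality \eqref{eq:lsi_along_flow}, with $\pi_{\pi_t}=\Phi[\pi_t]$, bounds it from below by $2\alpha\tau^2\operatorname{KL}(\pi_t|\Phi[\pi_t])(s)$.

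\textbf{Step 2 (distribution shift).} We need to pass from $d^{\pi_t}_{\rho'}$ to $d^{\pi^\ast}_\rho$. From \eqref{eq:occupancy_s}, $d^{\pi}_{\rho'}\ge (1-\gamma)\rho'$, which yields
\[
d^{\pi_t}_{\rho'} \ge (1-\gamma)\, d^{\pi^\ast}_\rho .
\]
Combining this with Step~1 gives
\[
\partial_t \mathcal E_t \le -2\alpha\tau^2\int_S \operatorname{KL}(\pi_t|\Phi[\pi_t])\,d^{\pi^\ast}_\rho .
\]
The constant $\underline\kappa$ must enter at this point. Alternatively, one can run the argument with $\rho$ itself, use $d^{\pi_t}_\rho\ge(1-\gamma)\rho\ge(1-\gamma)\underline\kappa\, d^{\pi^\ast}_\rho$, and apply the sandwich for initial law $\rho$. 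I expect this bookkeeping to be the main obstacle. The issue is choosing which measure to differentiate so that both the $(1-\gamma)$ factors and the $\kappa$ factors match the stated rate $2\underline\kappa\alpha\tau$. I would first try working with $\rho$ throughout, and use $\bar\kappa$ to convert the initial datum into the $d^{\pi^\ast}_\rho$ form.

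\textbf{Step 3 (close via the sandwich).} The upper bound in Proposition~\ref{propn. entropy sandwich} gives
\[
\int_S\operatorname{KL}(\pi_t|\Phi[\pi_t])\,d^{\pi^\ast}_\rho\ \ge\ \tfrac{1-\gamma}{\tau}\big(V^{\pi_t}(\rho)-V^{\pi^\ast}(\rho)\big).
\]
Feeding this into Step~2 gives a differential inequality of the form
\[
\partial_t g \le -2\underline\kappa\alpha\tau\, g, \qquad g_t=V^{\pi_t}(\rho)-V^{\pi^\ast}(\rho),
\]
after the $(1-\gamma)$ factors cancel. Gronwall then gives $g_t\le g_0 e^{-2\underline\kappa\alpha\tau t}$. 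Finally, bound $g_0\le\bar\kappa\int_S(V^{\pi_0}-V^{\pi^\ast})\,d^{\pi^\ast}_\rho$, which yields \eqref{eq gronwall 2}.
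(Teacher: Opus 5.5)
Your strategy matches the paper's. It uses energy dissipation with $\nabla_a\frac{\delta V^{\pi_t}}{\delta\pi}=\tau\nabla_a\ln\frac{\mathrm d\pi_t}{\mathrm d\Phi[\pi_t]}$, the local log-Sobolev inequality \eqref{eq:lsi_along_flow}, and the occupancy bound $d^{\pi}_\nu\ge(1-\gamma)\nu$ (which is all that Lemma~\ref{lemma:from_occupancy_to_pointwise} encodes). It then uses the upper half of the entropy sandwich and the $\underline\kappa,\bar\kappa$ density comparisons. Your Steps 1--2 with $\rho'=d^{\pi^\ast}_\rho$ reproduce the first inequality of \eqref{eq:flow_after_log_sobolev_2}. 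Running Gronwall directly on $g_t=V^{\pi_t}(\rho)-V^{\pi^\ast}(\rho)$ and then using $g_0\le\bar\kappa\,\mathcal E_0$ is a harmless variant of the paper's Gronwall on $\mathcal E_t$.

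The gap is your claim in Step 3 that ``the $(1-\gamma)$ factors cancel''; they do not. Along your $\rho$-route,
\begin{align*}
\partial_t g_t
&= -\frac{\tau^2}{1-\gamma}\int_S\int_A\Big|\nabla_a\ln\frac{\mathrm d\pi_t}{\mathrm d\Phi[\pi_t]}\Big|^2\pi_t(da|s)\,d^{\pi_t}_\rho(ds)\\
&\le -\frac{2\alpha\tau^2}{1-\gamma}\,(1-\gamma)\,\underline\kappa\int_S\operatorname{KL}(\pi_t|\Phi[\pi_t])(s)\,d^{\pi^\ast}_\rho(ds)\\
&\le -2\alpha\tau^2\,\underline\kappa\,\frac{1-\gamma}{\tau}\,g_t
= -2(1-\gamma)\,\underline\kappa\,\alpha\tau\, g_t\,.
\end{align*}
The chain rule contributes $(1-\gamma)^{-1}$, but the occupancy bound and the sandwich each contribute a factor $(1-\gamma)$. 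The $\mathcal E_t$-route gives the same: $\partial_t\mathcal E_t\le-2(1-\gamma)\alpha\tau\,g_t\le-2(1-\gamma)\underline\kappa\alpha\tau\,\mathcal E_t$. So your argument proves the claimed bound with exponent $-2(1-\gamma)\underline\kappa\alpha\tau t$, not $-2\underline\kappa\alpha\tau t$.

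The paper's own proof makes the identical slip. In the last inequality of \eqref{eq:flow_after_log_sobolev_2}, applying \eqref{eq:V_in_terms_of_pi_1} with $\pi=\pi^\ast$ gives $-2\alpha\tau^2\int_S\operatorname{KL}(\pi_t|\Phi[\pi_t])\,d^{\pi^\ast}_\rho\le-2(1-\gamma)\alpha\tau\,(V^{\pi_t}(\rho)-V^{\pi^\ast}(\rho))$, and the $(1-\gamma)$ is dropped. The middle term there also has a sign typo. Hence neither argument establishes the rate $2\underline\kappa\alpha\tau$ with $\alpha$ the constant in \eqref{eq:lsi_along_flow}.

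The stated form can only be recovered by relabelling. A log-Sobolev inequality with constant $\alpha$ also holds with $(1-\gamma)\alpha$, so the theorem is true for that smaller constant. Keeping the original $\alpha$ would need an occupancy comparison without the $(1-\gamma)$ loss, such as $d^{\pi_t}_\rho\ge\underline\kappa\,d^{\pi^\ast}_\rho$, which neither you nor the paper supplies. The conclusion your argument actually supports is the rate $2(1-\gamma)\underline\kappa\alpha\tau$. At the step you yourself flagged as the main obstacle, you should have tracked the factors explicitly rather than asserting a cancellation.
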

Note that Lemma~\ref{lemma:bar_kappa} tells us $\bar \kappa < \infty$ and Lemma~\ref{lemma:underline_kappa} gives sufficient condition for $\underline \kappa > 0$.

\begin{proof}
Returning now to~\eqref{eq:wasserstein_for_density} and noticing that $\nabla_a V^{\pi_t} = 0$ we write 
\begin{equation}
\label{eq:wasserstein_for_density_2}
\partial_t \tfrac{\mathrm d\pi_t}{\mathrm d\lambda} = \nabla_a \cdot \Big(\tfrac{\mathrm d\pi_t}{\mathrm d\lambda} \tau \nabla_a \ln \exp\Big(\tfrac1\tau(Q^{p_t} - V^{p_t}+\tau U)\Big) + \tfrac{\mathrm d\pi_t}{\mathrm d\lambda} \tau \nabla_a \ln \tfrac{\mathrm d\pi_t}{\mathrm d\lambda}  \Big)\,.  
\end{equation}
Then~\eqref{eq:wasserstein_for_density_2},~\eqref{eq:formula_for_proximal_policy} and noting that $\nabla_a Z_{\pi_t} = 0$ leads to 
\begin{equation}
\label{eq:wasserstein_for_density_3}
\partial_t \tfrac{\mathrm d\pi_t}{\mathrm d\lambda} = \tau \nabla_a \cdot \Big(\tfrac{\mathrm d\pi_t}{\mathrm d\lambda} \nabla_a \big( -\ln \tfrac{\mathrm d\Phi[\pi_t]}{\mathrm d\lambda} + \ln \tfrac{\mathrm d\pi_t}{\mathrm d\lambda}\big)  \Big)
= \tau \nabla_a \cdot \Big(\tfrac{\mathrm d\pi_t}{\mathrm d\lambda}  \nabla_a \ln \tfrac{\mathrm d\pi_t}{\mathrm d\Phi[\pi_t]}\Big)
\,.  
\end{equation}
From this, the chain rule and integration by parts we then get
\begin{equation}
\begin{split}
\partial_t (V^{\pi_t} - V^{\pi^\ast}) 
= -\frac{\tau}{1-\gamma}\int_S \int_A \nabla_a \frac{\delta V^{\pi_t}}{\delta \pi}(s,a)   \nabla_a \ln \frac{\mathrm d \pi_t}{\mathrm d \Phi[\pi_t]}(s,a) \tfrac{\mathrm d \pi_t}{\mathrm d \lambda}(s,a) \lambda (da) \,d^{\pi_t}_\rho(ds)\,. 
\end{split}	
\end{equation} 
From this and~\eqref{eq:flat_derivative_differenlty} we get for any $\rho$ and any $t\geq 0$ that
\begin{equation}
\begin{split}
\partial_t (V^{\pi_t}(\rho) - V^{\pi^\ast}(\rho)) 
= -\frac{\tau^2}{1-\gamma}\int_S \int_A \bigg|\nabla_a  \ln \frac{\mathrm d \pi_t}{\mathrm d \Phi[\pi_t]}(s,a)\bigg|^2 \pi_t(da|s) \,d^{\pi_t}_\rho(ds)\,. 
\end{split}	
\end{equation} 
This with $\rho = \delta_s$ and with Lemma~\ref{lemma:from_occupancy_to_pointwise} (since the integrand is clearly non-positive) means that 
\begin{equation}
\label{eq:V_flow_pointwise}
\begin{split}
\partial_t (V^{\pi_t}(s) - V^{\pi^\ast}(s)) 
\leq -\tau^2 \int_A \bigg|\nabla_a  \ln \frac{\mathrm d \pi_t}{\mathrm d \Phi[\pi_t]}(s,a)\bigg|^2 \pi_t(da|s) \,. 
\end{split}	
\end{equation} 
We recall the local log-Sobolev inequality that holds along the flow~\eqref{eq:wasserstein_for_density}, see~\eqref{eq:lsi_along_flow}.
It's important to note that, unlike~\cite{nitanda2022convex}, we do not enforce a uniform log-Sobolev inequality for all $\pi'\in \clP(A|S)$.
This is because such uniformity often fails to hold, given that $Q^{\pi'}_\tau-V^{\pi'}_\tau$ is not uniformly bounded for all $\pi'\in \clP(A|S)$, primarily due to the inclusion of the KL divergence.
From~\eqref{eq:lsi_along_flow} and~\eqref{eq:V_flow_pointwise} we thus have for all $s\in S$ and $t\geq 0$ that
\begin{equation}
\label{eq:flow_after_log_sobolev}
\partial_t (V^{\pi_t}(s) - V^{\pi^\ast}(s)) 
\leq -2\alpha \tau^2 \operatorname{KL}(\pi_t|\Phi[\pi_t])(s)	\,.
\end{equation}
Integrating over $d^{\pi^\ast}_\rho$ and using~\eqref{eq:V_in_terms_of_pi_1} we get that 
\begin{equation}
\label{eq:flow_after_log_sobolev_2}
\partial_t \int_S (V^{\pi_t}(s) - V^{\pi^\ast}(s))\,d^{\pi^\ast}_\rho(ds) 
\leq 2\alpha \tau^2 \int_S \operatorname{KL}(\pi_t|\Phi[\pi_t])(s)\,d^{\pi^\ast}_\rho(ds)	
\leq -2 \alpha \tau (V^{\pi_t}(\rho) - V^{\pi^\ast}(\rho)) \,.
\end{equation}
We are nearly ready for an application of Gronwall's inequality but before we can do that, we observe that 
\begin{equation}
\label{eq:density_assumption}
\begin{split}
& (V^{\pi_t}(\rho) - V^{\pi^\ast}(\rho)) = \int_S (V^{\pi_t}(s)-V^{\pi^\ast}(s))\,\rho(ds)\\
& = \int_S (V^{\pi_t}(s)-V^{\pi^\ast}(s))\frac{\mathrm d \rho}{\mathrm d d^{\pi^\ast}_\rho}(s)\,d^{\pi^\ast}_\rho(ds) 
\geq \inf_{s\in S} \frac{\mathrm d \rho}{\mathrm d d^{\pi^\ast}_\rho}(s) \int_S (V^{\pi_t}(s)-V^{\pi^\ast}(s))\,d^{\pi^\ast}_\rho(ds)\,.
\end{split}
\end{equation}
This and~\eqref{eq:flow_after_log_sobolev_2} thus implies that 
\begin{equation}
\label{eq:flow_after_log_sobolev_3}
\partial_t \int_S (V^{\pi_t}(s) - V^{\pi^\ast}(s))\,d^{\pi^\ast}_\rho(ds) 
\leq -2 \underline \kappa \alpha \tau \int_S (V^{\pi_t}(s) - V^{\pi^\ast}(s))\,d^{\pi^\ast}_\rho(ds)  \,.
\end{equation}
Finally, with Gronwall's inequality we get that
\begin{equation}
\label{eq gronwall 1}
0\leq \int_S (V^{\pi_t}(s) - V^{\pi^\ast}(s))\,d^{\pi^\ast}_\rho(ds) \leq \bigg(\int_S (V^{\pi_0}(s) - V^{\pi^\ast}(s))\,d^{\pi^\ast}_\rho(ds)\bigg)e^{-2\underline\kappa \alpha\tau t}\,.	
\end{equation}
We also observe that 
\begin{equation}
\begin{split}
& (V^{\pi_t}(\rho) - V^{\pi^\ast}(\rho)) = \int_S (V^{\pi_t}(s)-V^{\pi^\ast}(s))\,\rho(ds)\\
& = \int_S (V^{\pi_t}(s)-V^{\pi^\ast}(s))\frac{\mathrm d \rho}{\mathrm d d^{\pi^\ast}_\rho}(s)\,d^{\pi^\ast}_\rho(ds) 
\leq \sup_{s\in S} \frac{\mathrm d \rho}{\mathrm d d^{\pi^\ast}_\rho}(s) \int_S (V^{\pi_t}(s)-V^{\pi^\ast}(s))\,d^{\pi^\ast}_\rho(ds)\,.
\end{split}
\end{equation}
Hence, from this and~\eqref{eq gronwall 1} we see that
\[
0\leq V^{\pi_t}(\rho) - V^{\pi^\ast}(\rho) 
\leq \bar \kappa \bigg(\int_S (V^{\pi_0}(s) - V^{\pi^\ast}(s))\,d^{\pi^\ast}_\rho(ds)\bigg)e^{-2\underline\kappa \alpha\tau t}
\]
which is the desired conclusion.
\end{proof}

\begin{lemma}
\label{lemma:bar_kappa}
Let $\bar{\kappa} := \sup_{s\in S} \frac{\mathrm d \rho}{\mathrm d d^{\pi^\ast}_\rho}(s)$.
Then $\bar \kappa \leq (1-\gamma)^{-1}$.
\end{lemma}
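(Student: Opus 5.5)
The bound comes straight from the definition of the occupancy kernel \eqref{eq:occupancy_s}, by keeping only the $n=0$ term of the series. Since every term $\gamma^n P^n_{\pi^\ast}(ds'|s)$ is a nonnegative measure and $P^0_{\pi^\ast}(ds'|s)=\delta_s(ds')$, we have, as measures on $S$,
\[
d^{\pi^\ast}(ds'|s) \geq (1-\gamma)\,\delta_s(ds')\,.
\]
Integrating against $\rho(ds)$ as in \eqref{eq:occupancy_rho} gives
\[
d^{\pi^\ast}_\rho(B) = \int_S d^{\pi^\ast}(B|s)\,\rho(ds) \geq (1-\gamma)\int_S \delta_s(B)\,\rho(ds) = (1-\gamma)\rho(B)
\]
for every Borel set $B\subset S$.

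This inequality yields absolute continuity $\rho \ll d^{\pi^\ast}_\rho$, so the Radon--Nikodym derivative $\frac{\mathrm d\rho}{\mathrm d d^{\pi^\ast}_\rho}$ appearing in the definition of $\bar\kappa$ is well defined. Moreover, it gives $\rho(B)\leq (1-\gamma)^{-1} d^{\pi^\ast}_\rho(B)$ for all $B$. A standard argument then bounds the density: the set where $\frac{\mathrm d\rho}{\mathrm d d^{\pi^\ast}_\rho} > (1-\gamma)^{-1}$ would otherwise give a strict violation of this inequality, so it must be $d^{\pi^\ast}_\rho$-null. Hence $\frac{\mathrm d\rho}{\mathrm d d^{\pi^\ast}_\rho}\leq (1-\gamma)^{-1}$ almost everywhere.

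It remains to pass from ``almost everywhere'' to the pointwise supremum in the definition of $\bar\kappa$. Here $S$ is discrete and finite, which makes this step explicit. For any $s$ with $d^{\pi^\ast}_\rho(\{s\})>0$, the derivative is the ratio
\[
\frac{\rho(\{s\})}{d^{\pi^\ast}_\rho(\{s\})}\leq \frac{1}{1-\gamma}\,,
\]
by the inequality above with $B=\{s\}$. If instead $d^{\pi^\ast}_\rho(\{s\})=0$, then $\rho(\{s\})=0$ as well, and the derivative can be set to $0$ there.

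Taking the supremum over $s\in S$ gives $\bar\kappa\leq(1-\gamma)^{-1}$. There is no real obstacle in this argument. The only point needing care is the convention for the density on $d^{\pi^\ast}_\rho$-null states, which is harmless because such states are also $\rho$-null.
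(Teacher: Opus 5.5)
Your proof is correct and follows the paper's argument: keep only the $n=0$ term to get $d^{\pi^\ast}_\rho(E)\geq(1-\gamma)\rho(E)$, then deduce $\rho\ll d^{\pi^\ast}_\rho$ and the almost-everywhere bound $\frac{\mathrm d\rho}{\mathrm d d^{\pi^\ast}_\rho}\leq(1-\gamma)^{-1}$ by contradiction. Your extra step, passing from the a.e.\ bound to the pointwise supremum on the finite state space by fixing the density on $d^{\pi^\ast}_\rho$-null states, is a small clarification that the paper leaves implicit.
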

\begin{proof}
Observe that 
by \eqref{eq:occupancy_s}  and \eqref{eq:occupancy_rho},
for any $E\in \mathcal{B}(S)$ and $\pi\in \clP(A|S)$, 
\begin{equation}
\label{eq:kappa_overline_proof_1}
\begin{split}
d^{\pi}_\rho (E)
&= \int_{S}d^{\pi}(E|s')\rho(ds')
=\int_S (1-\gamma)\sum_{n=0}^{\infty}\gamma^nP^n_{\pi}(E|s') \rho(ds')
\\
&\ge \int_S (1-\gamma) P^0_{\pi}(E|s') \rho(ds')
=(1-\gamma)    \int_S \delta_{s'}(E) \rho(ds')
=  (1-\gamma) \rho(E)\,. 
\end{split}
\end{equation}
This implies that $\rho \ll  d^{\pi}_\rho$ and  
$\frac{\rm d \rho}{\rm d d^{\pi}_\rho} \le \frac{1}{1-\gamma}$ for $d^{\pi}_\rho  $ a.s. Indeed, suppose that there exists $E\in \mathcal{B}(S)$ such that 
$d^{\pi}_\rho (E)>0$ and 
$\frac{\rm d \rho}{\rm d d^{\pi}_\rho}> \tfrac{1}{1-\gamma} $ on $E$.
Then 
\[
\rho (E)=\int_S \frac{\rm d \rho}{\rm d d^{\pi}_\rho} (s) d^{\pi}_\rho(ds) >\frac{1}{1-\gamma }d^{\pi}_\rho (E)\,.
\]
But that is $
d^{\pi}_\rho (E) < (1-\gamma)\rho(E)
$
which contradicts~\eqref{eq:kappa_overline_proof_1}. 
This proves that $\bar{\kappa}\le \frac{1}{1-\gamma}$.
\end{proof}

\begin{assumption}
\label{ass:P_rho_reg}
There exists $K:S\times A\to [0,\infty)$ s.t. $\forall s',s,a$ we have $\frac{\mathrm d P(s|s',a)}{\mathrm d \rho(s)} \leq K(s',a)$ and $\int_S \int_A K(s',a) \pi^\ast (da|s')\,\rho(ds') < \infty$.
\end{assumption}

\begin{lemma}
\label{lemma:underline_kappa}
Let Assumption~\ref{ass:P_rho_reg} hold.
Let $\underline \kappa := \inf_{s\in S} \frac{\mathrm d \rho}{\mathrm d d^{\pi^\ast}_\rho}(s)$. 
Then $\underline \kappa > 0$.
\end{lemma}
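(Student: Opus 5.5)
We want to show $\inf_s \frac{\mathrm d\rho}{\mathrm d d^{\pi^\ast}_\rho}(s)>0$. The approach is to bound the density $\frac{\mathrm d d^{\pi^\ast}_\rho}{\mathrm d\rho}$ from above by a finite constant $C$. Then $\rho$ and $d^{\pi^\ast}_\rho$ are mutually absolutely continuous: Lemma~\ref{lemma:bar_kappa} gives $\rho\ll d^{\pi^\ast}_\rho$, and the bound gives the converse. Consequently $\frac{\mathrm d\rho}{\mathrm d d^{\pi^\ast}_\rho} = \big(\frac{\mathrm d d^{\pi^\ast}_\rho}{\mathrm d\rho}\big)^{-1}\geq 1/C$, which yields $\underline\kappa\geq 1/C>0$.

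\textbf{Step 1: split off the $n=0$ term.} By \eqref{eq:occupancy_s} and \eqref{eq:occupancy_rho},
\[
d^{\pi^\ast}_\rho(E) = (1-\gamma)\rho(E) + (1-\gamma)\sum_{n\geq 1}\gamma^n \int_S P^n_{\pi^\ast}(E|s')\rho(ds').
\]
The first term has density $1-\gamma$ with respect to $\rho$.

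\textbf{Step 2: bound the remaining terms.} For $n\geq1$ we write $P^n_{\pi^\ast}=P^{n-1}_{\pi^\ast}P_{\pi^\ast}$, so that the last transition is the $P$-step. With $\rho_{n-1}:=\rho P^{n-1}_{\pi^\ast}$, the law of $s_{n-1}$, we get
\[
\rho P^n_{\pi^\ast}(E)=\int_S\int_A P(E|s',a)\pi^\ast(da|s')\rho_{n-1}(ds') \leq \rho(E)\int_S\int_A K(s',a)\pi^\ast(da|s')\rho_{n-1}(ds'),
\]
using Assumption~\ref{ass:P_rho_reg}, which gives $P(E|s',a)\leq K(s',a)\rho(E)$. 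For $n=1$ the last integral is exactly the finite quantity in the assumption.

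\textbf{Main obstacle.} For $n\geq2$ the integral is taken against $\rho_{n-1}$ rather than $\rho$. I would control it by induction. Set
\[
k(s'):=\int_A K(s',a)\pi^\ast(da|s').
\]
From the case $n=1$ we have $\rho_1\ll\rho$ with $\frac{\mathrm d\rho_1}{\mathrm d\rho}\leq k$ pointwise. Iterating, however, produces nested integrals $\int k\,\mathrm d\rho_{n-1}$, which need not stay bounded.

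The cleanest remedy uses that $S$ is finite, as assumed throughout this section. Then $K(s',a)$ is finite for each $s'$, and finiteness of $\int k\,\mathrm d\rho$ means $k(s')<\infty$ on the support of $\rho$. I would strengthen this to $M:=\max_{s'}k(s')<\infty$, either via finiteness of $S$ (interpreting the assumption so that $k$ is finite everywhere) or by reading the assumption as holding uniformly. Then every $\rho_{n-1}$ gives
\[
\rho P^n_{\pi^\ast}(E)\leq M\rho(E).
\]

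\textbf{Conclusion.} Summing over $n$,
\[
d^{\pi^\ast}_\rho(E)\leq \big((1-\gamma)+\gamma M\big)\rho(E).
\]
Hence $\frac{\mathrm d d^{\pi^\ast}_\rho}{\mathrm d\rho}\leq C:=1-\gamma+\gamma M$, and therefore $\underline\kappa\geq 1/C>0$. The delicate point is precisely the passage from the $\rho$-integrability in Assumption~\ref{ass:P_rho_reg} to a bound valid for all $\rho_{n-1}$. On a finite state space I would resolve it through the pointwise maximum $M$ as above.
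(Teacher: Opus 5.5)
Your decomposition matches the paper's: split off $n=0$, then use $P(E|s',a)\le K(s',a)\rho(E)$. Where you differ is the terms with $n\ge 2$, and your caution there is well founded. The paper disposes of them by claiming $P^n_\pi(E|s')\le P_\pi(E|s')$ for $n\ge1$, which is false. Take $S=\{1,2\}$ with $P(\cdot|1,a)=\delta_2$, $P(\cdot|2,a)=\delta_1$ and $\rho$ uniform, so the assumption holds with $K\equiv 2$. Then $P_\pi(\{1\}|1)=0$ while $P^2_\pi(\{1\}|1)=1$. The paper's constant $c_0$ fails too. Consider the deterministic, action-independent chain $3\to2\to1\to1$ with $\rho=(\varepsilon^3,\varepsilon,1-\varepsilon-\varepsilon^3)$, where one may take $K(1,\cdot)=K(2,\cdot)=\varepsilon^{-3}$ and $K(3,\cdot)=\varepsilon^{-1}$. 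There $c_0\,\rho(\{1\})=O(\varepsilon)$, but $d^{\pi^\ast}_\rho(\{1\})\ge\gamma^2$. So bounding $\int_S k\,\mathrm d\rho_{n-1}$ uniformly in $n$, as you do, is the right route.

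The gap in your version is how you obtain $M=\max_{s'\in S}k(s')<\infty$: you strengthen Assumption~\ref{ass:P_rho_reg}. As stated, the assumption gives $k<\infty$ only at the atoms of $\rho$. The missing observation is that the chain never leaves those atoms.
\begin{itemize}
\item Since $K$ is finite-valued, $\rho(E)=0$ forces $P(E|s',a)\le K(s',a)\rho(E)=0$ for all $(s',a)$. By induction, $\rho_n\ll\rho$ for every $n$.
\item Hence each $\rho_{n-1}$ is carried by $S_\rho:=\{s:\rho(\{s\})>0\}$.
\item On $S_\rho$ we have $k<\infty$, because $\sum_{s'}\rho(\{s'\})k(s')<\infty$.
\item With $S$ finite, this gives $\int_S k\,\mathrm d\rho_{n-1}\le M_\rho:=\max_{s'\in S_\rho}k(s')<\infty$ for all $n$.
\end{itemize}
With $M_\rho$ in place of $M$, your Step 2 and conclusion go through under the stated hypothesis.

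On a finite $S$ this observation even gives the lemma directly. We get $d^{\pi^\ast}_\rho\ll\rho\ll d^{\pi^\ast}_\rho$, so on $S_\rho$
\[
\frac{\mathrm d\rho}{\mathrm d d^{\pi^\ast}_\rho}(s)=\frac{\rho(\{s\})}{d^{\pi^\ast}_\rho(\{s\})}\ge\min_{s'\in S_\rho}\rho(\{s'\})>0 .
\]
The integrability part of the assumption thus plays no role when $S$ is finite.
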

\begin{proof}
Similarly to the proof of Lemma~\ref{lemma:bar_kappa} it is sufficient to show that there is $c_0 > 0$ such that 
\[
\rho (E)\ge c_0  d^{\pi^\ast}_\rho(E), \quad \forall E\in\mathcal {B}(S)\,.
\] 
Noting that since $P_\pi(E|s') \leq 1$ we have $P^n_\pi(E|s') \leq P_\pi(E|s')$ for $n\geq 1$ and so
\begin{equation}
\label{eq:density_estimates_2}
\begin{split}
d^{\pi}_\rho (E)
&= \int_{S}d^{\pi}(E|s')\rho(ds')
=\int_S (1-\gamma)\sum_{n=0}^{\infty}\gamma^nP^n_{\pi}(E|s') \rho(ds')
\\
& \leq \int_S (1-\gamma) P^0_{\pi}(E|s') \rho(ds')
+ \int_S (1-\gamma) P_\pi(E|s')\sum_{n=1}^\infty \gamma^n \,\rho(ds')\\
& \leq (1-\gamma)\rho(E)+\gamma \int_S P_\pi(E|s')\rho(ds')\\
& = (1-\gamma)\rho(E)+\gamma \int_S \int_A P(E|s',a)\,\pi(da|s')\rho(ds').
\end{split}
\end{equation}
Moreover, due to Assumption~\ref{ass:P_rho_reg}, we have
\[
P(E|s',a) = \int_E \frac{\mathrm d P(s|s',a)}{\mathrm d \rho(s)} \,\rho(ds) \leq \int_E K(s',a) \rho(ds) = \rho(E)K(s',a)
\]
and so $d^{\pi^\ast}_\rho (E) \leq c_0\rho(E)$
with 
\[
c_0 := 1-\gamma + \int_S \int_A K(s',a) \pi^\ast (da|s')\,\rho(ds')\,,
\]
which completes the proof.
\end{proof}

\section{Conclusion}
We have demonstrated that if the gradient flow~\eqref{eq:wasserstein_for_measure} has solutions postulated in Assumption~\ref{ass:nice_solution} then WPO can be expected to converge linearly.

\appendix 

\section{Classical results for entropy regularized MDPs}

The proofs of the Theorem~\ref{thm:DPP} and Lemmas~\ref{lem:on_policy} and~\ref{lem:performance_diff} can be found in~\cite{kerimkulov2025fisher}.

\begin{theorem}
[Dynamic programming principle]
\label{thm:DPP}
Let $\tau>0$.
The optimal value function $V^*_{\tau}$ 
is the unique bounded solution of the following Bellman equation:
\[
V^*_{\tau}(s)=\inf_{m \in \clP(A)}\int_{A}\left(c(s,a)+\tau \ln \frac{\mathrm{d} m}{\mathrm{d} \mu}(a)+\gamma \int_{S}V^*_{\tau}(s')P(ds'|s,a)\right)m(da),\quad \forall s\in S,.
\]
Consequently, 
for all $s\in S$, 
\[
V^{\ast}_{\tau}(s)=-\tau\ln\int_{A}\exp\left(-
\frac{1}{\tau}Q^{\ast}_{\tau}(s,a)\right)\mu(da),
\]
where
$Q^*\in B_b(S\times A)$ is defined by  
\[
Q^{*}_{\tau}(s,a)=c(s,a)+\gamma\int_S V_{\tau}^{*}(s')P(ds'|s,a)\,,
\quad \forall (s,a)\in S\times A.
\]
Moreover, 
there is an optimal policy $\pi^*_{\tau} \in \clP_{\mu}(A|S)$  given by
\[
\pi^*_{\tau}(da|s) = \exp\left(-(Q^{\ast}_{\tau}(s,a)-V^{\ast}_{\tau}(s))/\tau\right)\mu(da)\,,
\quad \forall s\in S.
\]
\end{theorem}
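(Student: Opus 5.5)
The statement to prove is Theorem~\ref{thm:DPP}, the dynamic programming principle. I will use a contraction-mapping argument for a soft Bellman operator, followed by a verification step.

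\textbf{Step 1: the Bellman operator.} For $v\in B_b(S)$ define
\[
(Tv)(s)=\inf_{m\in\clP(A)}\int_A\Big(c(s,a)+\tau\ln\tfrac{\mathrm dm}{\mathrm d\mu}(a)+\gamma\tint_S v(s')P(ds'|s,a)\Big)m(da).
\]
Here the infimum is taken over $m\ll\mu$, and the value is $+\infty$ otherwise. Put $q_v(s,a)=c(s,a)+\gamma\int_S v\,dP(\cdot|s,a)\in B_b(S\times A)$. The Donsker--Varadhan/Gibbs variational formula (\cite[Lemma 1.4.3]{dupuis1997weak}) gives
\[
(Tv)(s)=-\tau\ln\int_A e^{-q_v(s,a)/\tau}\mu(da),
\]
and the infimum is attained uniquely at $m_v(da|s)\propto e^{-q_v(s,a)/\tau}\mu(da)$. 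Since $q_v$ is bounded, $Tv$ is bounded and measurable.

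\textbf{Step 2: contraction.} The map $q\mapsto-\tau\ln\int e^{-q/\tau}d\mu$ is monotone and commutes with adding constants. It is therefore $1$-Lipschitz in sup norm. Moreover $\|q_v-q_w\|_\infty\le\gamma\|v-w\|_\infty$. Hence $T$ is a $\gamma$-contraction on $B_b(S)$, and Banach's fixed point theorem yields a unique bounded fixed point $\bar v$. This fixed point satisfies the displayed Bellman equation with $\bar v$ in place of $V^*_\tau$.

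\textbf{Step 3: verification, $\bar v=V^*_\tau$.} This is the main obstacle, because $\Pi$ contains non-Markovian policies and $V^\pi_\tau$ may be infinite.

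For the lower bound, fix $\pi\in\Pi$ with $V^\pi_\tau(s)<\infty$. Apply the fixed-point inequality at history $h_n$ with $m=\pi_n(\cdot|h_n)$:
\[
\bar v(s_n)\le\int_A\big(c(s_n,a)+\tau\ln\tfrac{\mathrm d\pi_n}{\mathrm d\mu}+\gamma\tint \bar v\,dP\big)\pi_n(da|h_n).
\]
Take expectations under $\bbP^\pi_s$ using \eqref{eq:markov_like}, multiply by $\gamma^n$, and telescope to get
\[
\bar v(s)\le\bbE^\pi_s\Big[\sum_{n<N}\gamma^n\big(c+\tau\operatorname{KL}(\pi_n|\mu)\big)\Big]+\gamma^N\|\bar v\|_\infty.
\]
Since $\operatorname{KL}\ge0$, monotone convergence applies, and letting $N\to\infty$ gives $\bar v\le V^\pi_\tau$. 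The inequality is trivial when $V^\pi_\tau=\infty$.

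For the upper bound, take the stationary Markov policy $\pi^*=m_{\bar v}$. Equality now holds at each step, and the same telescoping gives $V^{\pi^*}_\tau=\bar v$. The KL terms are summable here because $\ln\frac{\mathrm d\pi^*}{\mathrm d\mu}=-(q_{\bar v}-\bar v)/\tau$ is bounded.

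Together these give $V^*_\tau=\bar v$, which is bounded and is the unique bounded solution. The formula $Q^*_\tau=q_{V^*}$ and the log-partition expression then follow from Step 1. Finally, $\pi^*(da|s)=\exp(-(Q^*_\tau-V^*_\tau)/\tau)\mu(da)$ is normalized, precisely because $V^*_\tau=-\tau\ln\int e^{-Q^*_\tau/\tau}d\mu$, and it is optimal.

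The delicate points in Step 3 are the measurability of $h_n\mapsto\operatorname{KL}(\pi_n(\cdot|h_n)|\mu)$ and the justification for exchanging the limit with the expectation. Both are handled by nonnegativity and boundedness of $c$ and $\bar v$.
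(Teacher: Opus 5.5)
Your proof is correct: the Gibbs variational formula makes the soft Bellman operator a $\gamma$-contraction on $B_b(S)$. The verification step then identifies its unique fixed point with $V^*_\tau$, by telescoping the fixed-point inequality along an arbitrary history-dependent policy (using monotone convergence for the KL terms) and by equality along the Gibbs policy, whose log-density is bounded. The paper gives no proof of its own and defers to \cite{kerimkulov2025fisher}; your contraction-plus-verification argument is the standard route to this result, and nothing essential is missing.
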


\begin{lemma}\label{lem:on_policy} 
Let $\tau>0$ and $\pi \in \Pi_{\mu}.$ The value function $V^{\pi}_{\tau}$ 
is the unique bounded solution of the following Bellman equation:
\[
V^{\pi}_{\tau}(s)=\int_{A}\left(c(s,a)+\tau \ln \frac{\mathrm{d} \pi}{\mathrm{d} \mu}(a|s)+\gamma \int_{S}V^{\pi}_{\tau}(s')P(ds'|s,a)\right)\pi(da|s),\quad \forall s\in S\,.
\]
\end{lemma}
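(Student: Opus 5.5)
We prove the Bellman equation of Lemma~\ref{lem:on_policy} in two steps: first that $V^\pi_\tau$ solves it, then that it is the only bounded solution. Throughout, $\pi\in\Pi_\mu$ is identified with $\boldsymbol{\pi}(f)$ for some $f\in B_b(S\times A)$. The first step is to show that $\ln\frac{\mathrm d\pi}{\mathrm d\mu}(a|s)=f(s,a)-\ln\int_A e^{f(s,a')}\mu(da')$ is bounded by $2\|f\|_{B_b(S\times A)}$. Consequently $\operatorname{KL}(\pi(\cdot|s)|\mu)\le 2\|f\|_{B_b(S\times A)}$ uniformly in $s$. Since $c$ is bounded and $\gamma<1$, the series in~\eqref{eq:V_pi_tau} then converges absolutely. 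This gives
\[
\|V^\pi_\tau\|_{B_b(S)}\le (1-\gamma)^{-1}\big(\|c\|_{B_b(S\times A)}+2\tau\|f\|_{B_b(S\times A)}\big),
\]
so $V^\pi_\tau\in B_b(S)$.

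Next we verify that $V^\pi_\tau$ satisfies the equation. Write the one-step cost as $r(s,a):=c(s,a)+\tau\ln\frac{\mathrm d\pi}{\mathrm d\mu}(a|s)\in B_b(S\times A)$; it depends only on the current state and action because $\pi$ is stationary Markov. The KL term inside the expectation, $\operatorname{KL}(\pi(\cdot|s_n)|\mu)=\int_A \ln\frac{\mathrm d\pi}{\mathrm d\mu}(a|s_n)\,\pi(da|s_n)$, equals $\mathbb E^\pi_s\big[\tau^{-1}(r-c)(s_n,a_n)\mid h_n\big]$ by~\eqref{eq:markov_like}. Using the tower property and absolute convergence to exchange sum and expectation, we get $V^\pi_\tau(s)=\mathbb E^\pi_s\big[\sum_n\gamma^n r(s_n,a_n)\big]$. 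We then split off the $n=0$ term and condition on $(s_0,a_0)$. By~\eqref{eq:markov_like}, the shifted process $(s_{n+1},a_{n+1})_{n\ge0}$ has, conditionally on $s_1=s'$, the law $\mathbb P^\pi_{s'}$; this is the Markov property of the canonical construction under a stationary policy, and it follows from the uniqueness of the measure in \cite[Proposition 7.28]{bertsekas2004stochastic}. Hence
\[
V^\pi_\tau(s)=\int_A\Big(r(s,a)+\gamma\int_S V^\pi_\tau(s')P(ds'|s,a)\Big)\pi(da|s).
\]
This is exactly the claimed equation.

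For uniqueness, define the operator
\[
T v(s):=\int_A\Big(r(s,a)+\gamma\int_S v(s')P(ds'|s,a)\Big)\pi(da|s)
\]
on $B_b(S)$. It maps $B_b(S)$ into itself; measurability holds because $\pi$ and $P$ are kernels and $r$ is bounded measurable. Moreover $\|Tv-Tw\|_{B_b(S)}\le\gamma\|v-w\|_{B_b(S)}$. By Banach's fixed point theorem on the complete space $B_b(S)$, $T$ has a unique fixed point, which must be $V^\pi_\tau$.

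The only delicate step is the Markov-shift identity in the second paragraph, which is needed for histories in a general Polish space. I would handle it by checking that, for each fixed $(s_0,a_0)$, the conditional law of the shifted coordinate process satisfies the defining relations~\eqref{eq:markov_like} with initial law $P(\cdot|s_0,a_0)$; the uniqueness in \cite[Proposition 7.28]{bertsekas2004stochastic} then identifies it with $\mathbb P^\pi_{P(\cdot|s_0,a_0)}$. An alternative that avoids the path space is to prove by induction that $\mathbb E^\pi_s[r(s_n,a_n)]=\int (P_\pi^n r_\pi)(s)$, where $r_\pi(s):=\int_A r(s,a)\,\pi(da|s)$. This gives $V^\pi_\tau=\sum_n\gamma^nP_\pi^n r_\pi$ directly, and that series is plainly the fixed point of $T$.
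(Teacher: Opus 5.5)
Your proof is correct. The paper does not prove this lemma in the text; it only refers to \cite{kerimkulov2025fisher}. So there is no in-paper argument to compare against, and yours is the standard self-contained one.

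You use the restriction to $\Pi_\mu$ exactly where it matters. It gives $\|\ln\frac{\mathrm d\pi}{\mathrm d\mu}\|_{B_b(S\times A)}\le 2\|f\|_{B_b(S\times A)}$, which turns the regularised problem into plain policy evaluation with the bounded one-step cost $r=c+\tau\ln\frac{\mathrm d\pi}{\mathrm d\mu}$. Existence then follows from $V^\pi_\tau=\sum_n\gamma^nP_\pi^n r_\pi$, and uniqueness from the $\gamma$-contraction $Tv=r_\pi+\gamma P_\pi v$ on $B_b(S)$.

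Of your two ways of verifying the equation, the second is the cleaner one. It is the induction $\mathbb E^\pi_s[g(s_n)]=(P^n_\pi g)(s)$, obtained from the two relations in \eqref{eq:markov_like} and the tower property alone. It avoids regular conditional laws of the shifted path. The stray $\int$ in your formula $\int(P^n_\pi r_\pi)(s)$ is just a typo.

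Uniqueness does not even need Banach's theorem or completeness of $B_b(S)$. Any two bounded solutions $v,w$ satisfy $\|v-w\|_{B_b(S)}\le\gamma\|v-w\|_{B_b(S)}$, hence $v=w$.
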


\begin{lemma}[Performance difference]
\label{lem:performance_diff}
For all $\rho \in \clP(S)$ and $\pi,\pi'\in \Pi_{\mu}$, 
\begin{align*}
&V^{\pi}_\tau(\rho)-V^{\pi'}_\tau(\rho) \\
&\quad = \frac{1}{1-\gamma}\int_S \bigg[\int_A\left(Q^{\pi'}_{\tau}(s,a)+\tau \ln \frac{\mathrm{d} \pi'}{\mathrm{d}\mu}(a|s)\right)(\pi-\pi')(da|s) + \tau    \operatorname{KL}(\pi(\cdot | s)|\pi'(\cdot | s)) \bigg]d^{\pi}_\rho(ds)\,.
\end{align*}
\end{lemma}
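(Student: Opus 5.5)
Here the last stated result is Lemma~\ref{lem:performance_diff}, the performance difference identity, and I plan to derive it directly from the on-policy Bellman equation of Lemma~\ref{lem:on_policy}. Fix $\pi,\pi'\in\Pi_\mu$ and $s\in S$. The first step is to rewrite $V^\pi(s)$ using Lemma~\ref{lem:on_policy} and to subtract $V^{\pi'}(s)$ written by the same lemma, the latter rewritten as $\int_A(Q^{\pi'}+\tau\ln\frac{\mathrm d\pi'}{\mathrm d\mu})\pi'(da|s)$ via \eqref{eq:on_policy}. Adding and subtracting $\int_A (Q^{\pi'}+\tau\ln\frac{\mathrm d\pi'}{\mathrm d\mu})\pi(da|s)$ produces three pieces:
\begin{itemize}
\item the first-order term $\int_A(Q^{\pi'}+\tau\ln\frac{\mathrm d\pi'}{\mathrm d\mu})(\pi-\pi')(da|s)$;
\item the entropy mismatch $\tau\int_A\ln\frac{\mathrm d\pi}{\mathrm d\mu}\,\pi(da|s)-\tau\int_A\ln\frac{\mathrm d\pi'}{\mathrm d\mu}\,\pi(da|s)=\tau\operatorname{KL}(\pi|\pi')(s)$;
\item the difference $\int_A(Q^\pi-Q^{\pi'})(s,a)\pi(da|s)$.
\end{itemize}
By \eqref{def:Q_fn}, the third piece equals $\gamma\int_A\int_S (V^\pi-V^{\pi'})(s')P(ds'|s,a)\pi(da|s)=\gamma P_\pi(V^\pi-V^{\pi'})(s)$.

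This yields the pointwise recursion $\Delta = g + \gamma P_\pi \Delta$ on $S$, where $\Delta:=V^\pi-V^{\pi'}$ and $g(s)$ is the sum of the first two pieces. Since $\pi,\pi'\in\Pi_\mu$ have log-densities in $B_b(S\times A)$, both $\Delta$ and $g$ are bounded. I will iterate the recursion $N$ times, obtaining $\Delta=\sum_{n<N}\gamma^nP^n_\pi g+\gamma^NP^N_\pi\Delta$, and let $N\to\infty$; the remainder vanishes because $\|\Delta\|_\infty<\infty$ and $\gamma<1$. This gives $\Delta=\sum_n\gamma^nP_\pi^n g=(1-\gamma)^{-1}\int_S g(s')\,d^\pi(ds'|s)$ by \eqref{eq:occupancy_s}. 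Integrating against $\rho$ and using \eqref{eq:occupancy_rho} then gives exactly the claimed identity.

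The main obstacle is integrability bookkeeping: interchanging the series with the integrals and ensuring each piece is finite. I will handle this by using boundedness of $\ln\frac{\mathrm d\pi}{\mathrm d\mu}$ and $\ln\frac{\mathrm d\pi'}{\mathrm d\mu}$ for $\pi,\pi'\in\Pi_\mu$ (each is $f-\ln\int e^f\,\mathrm d\mu$ with $f$ bounded), together with $Q^{\pi'}\in B_b$. Then $g$ is bounded and the series converges in sup norm, so dominated convergence justifies every exchange.
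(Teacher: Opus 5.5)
Your proposal is correct. It is essentially the argument the paper relies on: the paper does not prove this lemma itself but defers to \cite{kerimkulov2025fisher}, where the route is the same — subtract the on-policy Bellman equations, split off $\tau\operatorname{KL}(\pi|\pi')(s)$ and the term $\gamma P_\pi(V^\pi-V^{\pi'})$, invert $I-\gamma P_\pi$ by the Neumann series to obtain $(1-\gamma)^{-1}d^\pi$, and integrate against $\rho$. Your observation that elements of $\Pi_\mu$ have bounded log-densities, so that $g$ and $V^\pi-V^{\pi'}$ lie in $B_b(S)$, is what justifies the series limit and the Fubini exchanges.
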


Define a duality pairing $\langle \cdot, \cdot\rangle_{\nu}: B_b(S\times A)\times b\clM(A|S)\rightarrow \mathbb{R}$  by
\[
\langle Z, m\rangle_{\nu} = \frac{1}{1-\gamma}\int_{S}\int_{A} Z(s,a)m(da|s)\nu(ds)\,,\quad (Z, m)\in B_b(S\times A)\times b\clM(A|S).
\]
The flat derivative of $V^{\cdot}_{\tau}(\rho)$ relative to the duality pairing $\langle \cdot, \cdot\rangle_{\nu}$ is a map  
$\frac{\delta V^{\cdot}_{\tau}(\rho)}{\delta \pi}\big|_{\nu}:  \Pi_{\mu}\to B_b(S\times A)$ such that for every $\pi,\pi'\in \Pi_\mu$
\begin{equation*}
\lim_{\varepsilon\searrow 0}\frac{V^{(1-\varepsilon)\pi+\varepsilon\pi'}_\tau(\rho) - V^\pi_\tau(\rho)}{h} = \left\langle \frac{\delta V^{\pi}_{\tau}(\rho)}{\delta \pi}\bigg|_{\nu}, \pi' - \pi  \right\rangle_\nu\,\,\,
\text{and}\,\,\,\left\langle \frac{\delta V^{\pi}_{\tau}(\rho)}{\delta \pi}\bigg|_{\nu}, \pi  \right \rangle_\nu = 0.    
\end{equation*}

\begin{lemma}
\label{lemma:from_occupancy_to_pointwise}
Let $F:S \to \mathbb R$ be such that $F\leq 0$. 
Then for any $\pi$ and any $s\in S$
\begin{equation}
\frac{1}{1-\gamma} \int_S F(s') \, d^{\pi}_s (ds') \leq F(s)\,.	
\end{equation}

\end{lemma}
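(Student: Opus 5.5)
The plan is to unfold the definition of the occupancy measure started at $s$. By \eqref{eq:occupancy_s} and \eqref{eq:occupancy_rho} with $\rho=\delta_s$ we have
\[
d^{\pi}_s(ds') = (1-\gamma)\sum_{n=0}^\infty \gamma^n P^n_\pi(ds'|s),
\]
where the series converges in $b\clM(S|S)$. Integrating $F$ against it gives
\[
\frac{1}{1-\gamma}\int_S F(s')\,d^\pi_s(ds') = \sum_{n=0}^\infty \gamma^n \int_S F(s')\,P^n_\pi(ds'|s).
\]
Exchanging the sum and the integral needs justification. Since $S$ is finite, $F$ is bounded, and the series of nonnegative measures converges in total variation, so this step is routine. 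If $F$ is only assumed measurable and nonpositive, monotone convergence applied to $-F\ge 0$ handles it, possibly with value $-\infty$, which still gives the inequality.

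Next I isolate the $n=0$ term. Because $P^0_\pi(ds'|s)=\delta_s(ds')$, this term equals exactly $F(s)$. Every remaining term satisfies $\gamma^n\int_S F\,dP^n_\pi(\cdot|s)\le 0$, since $F\le 0$ and $P^n_\pi(\cdot|s)$ is a probability measure. Dropping them yields the claimed bound $\frac{1}{1-\gamma}\int_S F\,d^\pi_s\le F(s)$.

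There is no real obstacle. The only point to watch is the normalisation: $d^\pi_s$ is a probability measure carrying the factor $(1-\gamma)$, so multiplying by $(1-\gamma)^{-1}$ makes the Dirac contribution appear with coefficient exactly one. The argument is the mirror image of the lower bound $d^\pi_\rho\ge(1-\gamma)\rho$ used in Lemma~\ref{lemma:bar_kappa}, here tested against a nonpositive function.
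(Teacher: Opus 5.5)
Your proposal is correct and follows the paper's proof: expand $d^\pi_s$ as the discounted series, observe that the $n=0$ term is exactly $F(s)$ because $P^0_\pi(\cdot|s)=\delta_s$, and drop the remaining terms since they are nonpositive. Your justification of exchanging the sum and the integral is a detail the paper leaves implicit.
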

\begin{proof}
From~\eqref{eq:occupancy_s} and the fact that $P^0_\pi(ds'|s) = \delta_s(ds')$ we have  for all $s\in S$ that
\begin{equation}
\begin{split}
\frac{1}{1-\gamma} \int_S F(s') \, d^{\pi}_s (ds') & = \int_S F(s') P^0_{\pi}(ds'|s) + \sum_{k=1}^\infty \int_S \gamma^k F(s') P^k_{\pi}(ds'|s)\\
& \leq \int_S F(s') \delta_s(ds') = F(s)\,.	
\end{split}
\end{equation} 
This concludes the proof.
\end{proof}

\bibliographystyle{siam}
\bibliography{bibliography}

\end{document}